\definecolor{cvprblue}{rgb}{0.21,0.49,0.74}
\crefname{section}{Sec.}{Secs.}
\Crefname{section}{Section}{Sections}
\Crefname{table}{Table}{Tables}
\crefname{table}{Tab.}{Tabs.}
\newtheorem{proposition}{Proposition}
\begin{document}
\title{Correspondence-Free Non-Rigid Point Set Registration Using \\Unsupervised Clustering Analysis}

\newcommand*{\affaddr}[1]{#1}
\newcommand*{\affmark}[1][*]{\textsuperscript{#1}}
\newcommand*{\email}[1]{\small{\texttt{#1}}}
\author{
Mingyang Zhao$^{1}$ \quad
Jingen Jiang$^{2}$ \quad
Lei Ma$^{3*}$ \quad
Shiqing Xin$^{2}$ \quad
Gaofeng Meng$^{4,5}$ \quad
Dong-Ming Yan$^{4,5*}$ \quad
\\
\affaddr{$^{1}$CAIR, HKISI, CAS}\quad
\affaddr{$^{2}$Shandong University}\quad
\affaddr{$^{3}$Peking University}\quad
\affaddr{$^{4}$MAIS, CASIA}\quad
\affaddr{$^{5}$UCAS}\quad
\\
}

\twocolumn[{
\renewcommand\twocolumn[1][]{#1}
\maketitle
\begin{center}
\captionsetup{type=figure}
\vspace{-0.9cm}
\includegraphics[width=0.87\textwidth]{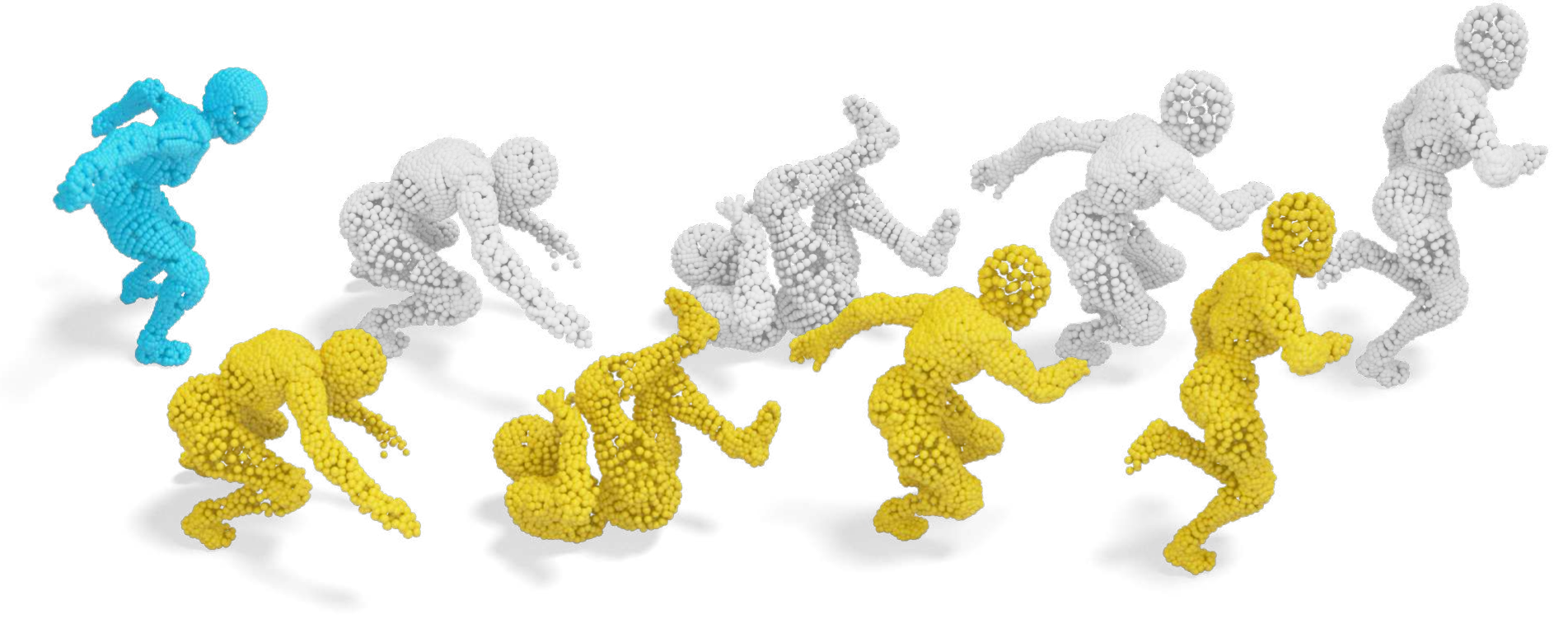}
\vskip -0.3cm
\captionof{figure}{Non-rigid registration on 3D point sets. The blue and gray models represent the source and target point clouds, respectively, while the yellow models are our registration results. Our method achieves successful registrations even for shapes with challenging deformations.}
\label{fig:teaser}
\end{center}
}]

\maketitle

\renewcommand{\thefootnote}
{\fnsymbol{footnote}}
\footnotetext[1]{Corresponding Authors.}

\begin{abstract}
This paper presents a novel non-rigid point set registration method that is inspired by unsupervised clustering analysis. Unlike previous approaches that treat the source and target point sets as separate entities, we develop a holistic framework where they are formulated as clustering centroids and clustering members, separately. We then adopt Tikhonov regularization with an $\ell_1$-induced Laplacian kernel instead of the commonly used Gaussian kernel to ensure smooth and more robust displacement fields. Our formulation delivers closed-form solutions, theoretical guarantees, independence from dimensions, and the ability to handle large deformations. Subsequently, we introduce a clustering-improved Nystr{\"o}m method to effectively reduce the computational complexity and storage of the Gram matrix to linear, while providing a rigorous bound for the low-rank approximation. Our  method achieves high accuracy results across various scenarios and surpasses competitors by a significant margin, particularly on shapes with substantial deformations. Additionally, we demonstrate the versatility of our method in challenging tasks such as shape transfer and medical registration. \href{https://github.com/zikai1/CVPR24_PointSetReg}{[Code release]}  
\end{abstract}

\section{Introduction}
\label{sec:intro}
Non-rigid point set registration is to 
optimize a non-linear displacement field that accurately aligns one geometric shape with another. Due to its fundamental importance, non-rigid registration plays a dominant role in a wide range of applications, such as scene reconstruction~\cite{zollhofer2014real,zhao2022graphreg,park2021nerfies}, pose tracking~\cite{newcombe2015dynamicfusion}, animation~\cite{siarohin2023unsupervised}, deformable shape manipulation and editing~\cite{tang2022neural}, and so on.

However, given two point sets, one acting as the source and the other as the target, non-rigid registration presents a highly ill-posed and much more complex challenge compared to the rigid counterpart. This increased complexity is primarily attributed to the additional freedom of deformations allowed in non-rigid registration, especially when dealing with shapes that exhibit large deformations (\cref{fig:teaser}).

To enhance the registration quality for shapes undergoing large deformations, numerous pioneering methods have been actively researched. Rather than directly optimizing the registration process, these methods usually employ a \emph{two-step} approach~\cite{ma2018nonrigid,Yao_2020_Fast_RNRR,yao2023fast,qin2023deep}. First, they perform \emph{shape matching} by identifying corresponding points between the source and target shapes without considering geometry deformations. Then, they estimate the alignment transformation based on the established correspondences via off-the-shelf registration techniques. While there has been significant attention and research dedicated to the initial shape matching stage, the exploration of \emph{direct registration} methods for handling large deformations, without relying on shape matching, is comparatively limited and poses substantial challenges~\cite{hirose2020bayesian}.

In this work, we address the problem of non-rigid point set registration without correspondences, with a specific emphasis on point sets exhibiting large deformations. To overcome this challenge, we present a fresh perspective and introduce a novel method. Our approach reformulates the non-rigid deformation process as an \emph{unsupervised clustering problem} within the context of machine learning. Unlike previous approaches that treat the two point sets as \emph{separate} entities, we consider them as \emph{integral} parts of a whole.

Concretely, we designate the source point set as the clustering centroids, while the target one as the clustering samples. This \emph{holistic} treatment enables us to leverage the interplay between these two sets. Then the dynamic optimization and update of the clustering centroids correspond to the underlying deformation of the source shape. We highlight the advantages of our novel registration function, which is built on clustering analysis, from both information theory and convex optimization perspectives. Furthermore, we provide \emph{closed-form} solutions to our objective function during each iteration, which enables fast and efficient implementations. We introduce a sparsity-induced \emph{Laplacian kernel} ($\ell_1$-norm) in the Tikhonov regularization to ensure that the displacement field of clustering centroids remains as smooth as possible. This differs from the commonly used Gaussian kernel and exhibits higher robustness, as demonstrated by experimental results. Additionally, we leverage clustering analysis to adopt the improved Nyström low-rank approximation~\cite{zhang2008improved}, which reduces the computational complexity and storage requirements of the Gram matrix to linear. Meanwhile, we give a rigorous proof of the approximation error bound associated with the Laplacian kernel.

Our method is independent of spatial dimensions, allowing us to evaluate and compare its performance in both 2D and 3D settings. The experimental results demonstrate the superiority of our method compared to baselines by a large margin. This is particularly evident in scenarios involving large deformations, such as shape transfer and medical data registration.

Our contributions can be summarized as follows: 
\begin{itemize}
    \item We propose a novel and correspondence-free method for non-rigid point set registration, utilizing unsupervised clustering analysis. The method achieves impressive results across various settings and mitigates the challenge without explicit correspondences.
    \item We incorporate the Laplacian kernel function for robust displacement regularization and provide a rigorous theoretical analysis to prove the approximation error bound of the Nystr{\"o}m low-rank method.
    \item Our method is dimension-independent, offering closed-form solutions during optimization, and significantly improves performance in handling large deformations.
\end{itemize}

\vspace{-0.3cm}
\section{Related Work}
We review the work that is closely aligned with ours. Readers are directed to ~\cite{tam2012registration,deng2022survey} for comprehensive studies.
\vspace{-0.3cm}
\paragraph{Non-rigid registration.} Differing from shape matching that focuses on finding inlier correspondences, non-rigid registration aims to optimize the displacement field. Various pioneering algorithms employ an optimization paradigm that minimizes both the data and penalty terms simultaneously. Amberg~\etal~\cite{amberg2007optimal} extended the rigid  iterative closest point algorithm~\cite{besl1992method} to non-rigid settings, while Yao~\etal~\cite{Yao_2020_Fast_RNRR,yao2023fast} recently improved non-rigid ICP regarding both accuracy and efficiency through deformation graph optimization. \emph{Coherent Point Drift} (CPD)~\cite{myronenko2010point} and GMM~\cite{jian2010robust}  developed probabilistic frameworks by minimizing the negative logarithm likelihood function to enhance the robustness for non-rigid point set registration. Ma~\etal~\cite{ma2018nonrigid} further incorporated the shape context descriptor~\cite{belongie2002shape} to establish shape correspondences for better 2D registration. Hirose~\cite{hirose2020bayesian, hirose2022geodesic} recently formulated CPD in a Bayesian setting, which effectively overcomes CPD's limitations and delivers impressive results.

With the advancement of deep learning, neural network-based methods have also been proposed for non-rigid point set registration~\cite{huang2022multiway,li2022lepard,qin2023deep}. Most of them utilize neural networks to extract features for point correspondences and then apply classical methods such as non-rigid ICP for registration. Instead of focusing on shape matching and heavily rely on data annotations, our method is unsupervised and reasons from a case-by-case geometric perspective. This allows us to achieve faithful registrations that are more generalizable to unknown categories.

\paragraph{Deformation representation.} The representation of the deformation field is a key component in non-rigid registration. Several existing works are based on thin plate spline functions~\cite{chui2003new,jian2010robust,santa2013correspondence}, which can be viewed as a regularization of the second-order derivatives of the transformations~\cite{myronenko2010point}. Another line of researches utilize kernel functions or a reproducing kernel Hilbert space to describe the deformation field~\cite{myronenko2010point,ma2015non,ma2015robust,tang2018framework}. However, many of these methods are limited to the Gaussian kernel due to the reliance on fast Gauss transform ~\cite{greengard1991fast}. Recently, the \emph{Multi-Layer Perception} (MLP) network has been employed to represent the deformation field by mapping input coordinates to signal values~\cite{li2021neural,park2021nerfies,li2022non} and the deformation degree is controlled by frequencies. These methods have shown promising results in dynamical reconstruction and scene flow estimation, which are typically considered less challenging tasks compared to dealing with large deformations.

\section{Preliminaries on Clustering Analysis}\label{preliminary}
As one of the representative unsupervised learning frameworks, clustering analysis plays a fundamental role in various scientific research domains~\cite{xu2005survey}. The pioneering work~\cite{liao2020point,zhao2023accurate} explored clustering metrics for rigid point cloud registration. In contrast, we distinguish ourselves by addressing a more challenging non-rigid problem, which we have completely reformulated as a clustering process with a different objective function. We present a concise overview on two commonly used clustering approaches: \emph{fuzzy clustering} and \emph{Elkan k-means clustering} analysis.

\subsection{Fuzzy Clustering Analysis} Given a dataset $\mathbf{X}=\{\bm{x}_i\in \mathbb{R}^n\}_{i=1}^M$, fuzzy clustering analysis solves the following problem: 
\begin{small}
\begin{equation}
\min_{\mathbf{U},\mathbf{V}}\sum_{j\!=\!1}^C\sum_{i\!=\!1}^M(u_{ij})^r||\bm{x}_i\!-\!\bm{v}_j||_2^2, s.t.\sum_{j\!=\!1}^Cu_{ij}\!=\!1,u_{ij}\in[0, 1],
\label{eq:fcm}
\end{equation}
\end{small}where $\mathbf{U}=[u_{ij}]_{M\times C}\in\mathbb{R}^{M\times C}$ is the \emph{fuzzy membership degree matrix},  $\mathbf{V}=\{\bm{v}_j\in \mathbb{R}^n\}_{j=1}^C$ is the set of \emph{clustering centroids} consisting of $C\in\mathbb{Z}_{+}$ classes, and $r\in[1,+\infty)$ is the \emph{fuzzy factor}, which controls the clustering fuzziness. To enhance the clustering performance on unbalanced datasets, Miyamoto~\etal\cite{miyamoto2008algorithms} proposed the inclusion of \emph{cluster size controlling variables} $\bm{\alpha}=[\alpha_1, \cdots, \alpha_C]\in\mathbb{R}^{C}$ in ~\cref{eq:fcm}, and thus classes with more samples may lead to higher fuzzy membership degree. Since Euclidean distance-based clustering algorithms are primarily suitable for spherical data, \emph{Mahalanobis distance} is latter introduced to generalize the fuzzy clustering analysis to accommodate ellipsoidal structures~\cite{gustafson1979fuzzy}. Recently, \cite{chen2023ell} combined the merits of previous fuzzy clustering approaches and developed a novel clustering framework based on the $\ell_{2,p}$ norm, which achieves appealing results on a set of clustering analysis tasks: 
\begin{equation}
\begin{small}
\begin{gathered}
\min_{\mathbf{U},\mathbf{V},\mathbf{\Sigma},\bm{\alpha}}\sum_{j\!=\!1}^C\sum_{i\!=\!1}^M{u}_{ij}||\mathbf{\Sigma}_j^{-\frac{1}{2}}(\bm{x}_i\!-\!\bm{v}_j)||_2^p
\!+\!{u}_{ij}\text{log}|\mathbf{\Sigma}_j|\!+\!\lambda {u}_{ij}\text{log}\frac{{u}_{ij}}{\alpha_j},\\s.t.~ |\mathbf{\Sigma}_j|=\theta_j, \sum_{j=1}^C{u}_{ij}=1, \sum_{j=1}^C\alpha_{j}=1, u_{ij}, \alpha_j\in[0, 1]. 
\end{gathered}
\label{eq:loss_cluster}
\end{small}
\end{equation}where $\lambda\in\mathbb{R}^{+}$ is a regularization parameter, and  $\mathbf{\Sigma}_j\in \mathbb{S}^{n}_{++}\triangleq\{\mathbf{A}\in\mathbb{R}^{n\times n}|\bm{x}^{T}\mathbf{A}\bm{x}>0, \forall \bm{x}\in\mathbb{R}^n\}$ denotes the covariance matrix of the $j$-th class, with the corresponding determinant equivalent to $|\mathbf{\Sigma}_j|\in\mathbb{R}$. We explore the application of this clustering analysis framework to non-rigid point set registration and demonstrate its superior performance over previous registration approaches.

\subsection{Elkan $k$-Means Clustering} \label{sec:k-means}
In contrast to fuzzy clustering analysis, the $k$-means algorithm~\cite{macqueen1967some} has emerged as one of the most widely used clustering methods due to its simplicity. {Elkan $k$-means clustering} further introduced the \emph{triangle inequality} into the $k$-means framework to avoid unnecessary distance calculations, which dramatically speeds up the primary $k$-means clustering process. More details of Elkan $k$-means clustering can be found in~\cite{elkan2003using}.

\section{Proposed Method}\label{method}  
\paragraph{Problem formulation.} Given two point sets $\mathbf{X}=\{\bm{x}_i\in \mathbb{R}^n\}_{i=1}^M$ and $\mathbf{Y}=\{\bm{y}_j\in \mathbb{R}^n\}_{j=1}^N$, where $\mathbf{X}$ and $\mathbf{Y}$ are named as the target and the source, separately, the objective of non-rigid point set registration is to find the optimal deformation map $\mathcal{T}$ that minimizes the shape deviation between $\mathcal{T}(\mathbf{Y})\triangleq\mathbf{Y}+\mathbf{\nu}(\mathbf{Y})$ and $\mathbf{X}$, where $\mathbf{\nu}$ represents the displacement filed acting on each source point $\bm{y}_j$.

\subsection{Clustering-Induced Non-Rigid Registration}
\paragraph{Observations.} We notice that during the clustering process, the spatial position of clustering centroids $\mathbf{V}$ are dynamically updated until the distance between the centroids and their members is minimized. This dynamic process bears resemblance to the iterative update of each source point $\mathcal{T}(\bm{y}_j)$. Inspired by this, we propose to formulate non-rigid registration as an unsupervised clustering process. We consider $\mathbf{Y}$ as the clustering centroids and $\mathbf{X}$ as the clustering members. We customize \cref{eq:loss_cluster} to optimize the overall clustering loss by
\begin{small}
\begin{equation}
\begin{gathered}
\min F(\mathbf{U},\bm{\alpha},\mathbf{\Sigma},\nu)\begin{aligned}=\sum_{j=1}^C\sum_{i=1}^M{u}_{ij}||\mathbf{\Sigma}_j^{-\frac{1}{2}}(\bm{x}_i\!-\!(\bm{y}_j\!+\!\nu(\bm{y}_j)))||_2^2\end{aligned} \\
\begin{aligned}\!+\!{u}_{ij}\text{log}|\mathbf{\Sigma}_j|\!+\!\lambda {u}_{ij}\text{log}\frac{{u}_{ij}}{\alpha_j},\end{aligned} \\
s.t.~|\mathbf{\Sigma}_j|=\theta_j, \sum_{j=1}^C{u}_{ij}=1, \sum_{j=1}^C\alpha_{j}=1, u_{ij}, \alpha_j\in[0,1]. 
\end{gathered}
\label{eq:loss_reg}
\end{equation} 
\end{small}Here we set $p=2$ to ease the computation, which also ensures closed-form solutions as derived in the following.

\paragraph{Regularization.} As in~\cite{myronenko2010point,hirose2020bayesian}, we incorporate \emph{Tikhonov regularization}~\cite{girosi1995regularization} to promote smoothness in the displacement field of clustering centroids. Thus, our objective function is optimized to find the optimal locations of clustering centroids as follows:
\begin{small}
\begin{equation}
\min F(\mathbf{U},\bm{\alpha}, \mathbf{\Sigma},\nu)+\zeta \mathcal{R}(\nu),
\label{eq:final_loss}
\end{equation}
\end{small}where $\zeta$ is a trade-off parameter. $\mathcal{R}(\cdot)$ is an operator that penalizes the high-frequency component of $\nu$ if we consider it in the Fourier domain, \ie,
\begin{small}
\begin{equation}
\mathcal{R}(\nu)=\int_{\mathbf{R}^n}d\mathbf{s}\frac{||\tilde{\nu}(\mathbf{s})||_2^2}{\tilde{K}(\mathbf{s})}.
\label{eq:regularization}
\end{equation}
\end{small}$K(\mathbf{s})$ is a kernel function regarding the frequency variable $\mathbf{s}$, and $\tilde{f}$ indicates the Fourier transform of the function $f$. 
\subsection{Virtues of the Newly-Defined Function}
We provide a theoretical analysis of \cref{eq:loss_reg} from both information theory and optimization perspectives. This analysis allows us to highlight the virtues of our newly introduced loss function for non-rigid point set registration.
\paragraph{Information theory view.} We re-write $F(\mathbf{U}, \bm{\alpha}, \mathbf{\Sigma},\nu)$ as
\begin{small}
\begin{equation*}
\sum_{j=1,i=1}^{C,M}{u}_{ij}||\mathbf{\Sigma}_j^{-\frac{1}{2}}(\bm{x}_i-(\bm{y}_j+\nu(\bm{y}_j)))||_2^2
+{u}_{ij}\text{log}(\frac{|\mathbf{\Sigma}_j|}{{\alpha_j^{\lambda}}})-\lambda H(\mathbf{U}),
\label{eq:loss_cluster_info}
\end{equation*}
\end{small}where $H(\mathbf{U})=-\sum_{j=1}^C\sum_{i=1}^Mu_{ij}\log(u_{ij})$ is the entropy of $\mathbf{U}$. From the perspective of information theory, this \emph{entropy regularization} term serves to push $F(\mathbf{U},\bm{\alpha},\mathbf{\Sigma},\nu)$ towards $\mathbf{U}$ with a uniform distribution that makes $H(\mathbf{U})$ the maximal and thus drags $F(\mathbf{U},\alpha,\mathbf{\Sigma},\nu)$ away from the sparse $\mathbf{U}$. This not only enhances the smoothness of the feasible set, but also improves the computational stability during optimization, \ie, avoiding $\lim\limits_{u_{ij}\rightarrow0}\log(u_{ij})=-\infty$~\cite{cuturi2013sinkhorn}.

\paragraph{Optimization view.} Alternatively, from an optimization point of view, $u_{ij}\log(u_{ij})$ is a convex function in terms of $u_{ij}$ with $\lambda$ controlling the degree of convexity. Moreover, $u_{ij}\log(u_{ij})$ acts as a \emph{barrier function} that restricts $u_{ij}$ to the range of $[0, 1]$ and prevents it from taking values outside this range~\cite{yuille1994statistical}.

\subsection{Closed-Form Solutions} 
Our method enables closed-form solutions for each variable during the optimization step as derived in the following.

\paragraph{Update of $\mathbf{U}$.} We fix $\bm{\alpha}, \mathbf{\Sigma}, \nu$ and update $\mathbf{U}$, which becomes a convex optimization problem. Utilizing the Lagrangian multiplier and ignoring parameters that are irrelevant to $\mathbf{U}$, we obtain
\begin{small}
\begin{equation*}
\begin{gathered}
\mathcal{L}(\mathbf{U},\bm{\beta})\begin{aligned}=\sum_{j=1}^C\sum_{i=1}^M{u}_{ij}||\mathbf{\Sigma}_j^{-\frac{1}{2}}(\bm{x}_i-(\bm{y}_j+\nu(\bm{y}_j)))||_2^2\end{aligned} \\
\begin{aligned}+{u}_{ij}\text{log}|\mathbf{\Sigma}_j|+\lambda {u}_{ij}\text{log}\frac{{u}_{ij}}{\alpha_j}+\sum_{i=1}^M\beta_i(\sum_{j=1}^C{u}_{ij}-1),\end{aligned}
\end{gathered}
\label{eq:loss_U}
\end{equation*}
\end{small}where $\bm{\beta}=\{\beta_i\in\mathbb{R}\}_{i=1}^M$ are the set of Lagrangian multipliers. By equating $\frac{\partial\mathcal{L}}{\partial\mathbf{U}}=0$, we have 
\begin{small}
\begin{equation}
\begin{aligned}\mathbf{U}=(\mathrm{diag}(\mathbf{A}\mathbf{1}_{C}))^{-1}\mathbf{A}
\\
\end{aligned}
\label{eq:U}
\end{equation}
\end{small}Here $\mathbf{A}={\mathrm{exp}(-\mathbf{D}/\lambda)\mathrm{diag}(\bm{\alpha}\odot|\mathbf{\Sigma}|)}$, $\mathbf{D}=[d_{ij}]_{M\times C}\in \mathbb{R}^{M\times C}$ is a squared Euclidean distance matrix with $d_{ij}=\|\mathbf{\Sigma}_j^{-\frac{1}{2}}(\bm{x}_i-(\bm{y}_j+\nu(\bm{y}_j)))\|_2^2$, $\mathrm{exp}(\cdot)$ is the element-wise exponential operator of matrices, $\mathrm{diag}(\mathbf{z})$ is an operator that creates a square diagonal matrix with the vector $\mathbf{z}$ on its main diagonal, and $|\mathbf{\Sigma}|=[|\mathbf{\Sigma}_1|, \cdots, |\mathbf{\Sigma}_C|]^T\in \mathbb{R}^C$. $\mathbf{1}_C$ is the $C$-dimensional vector of all ones, and $\odot$ represents the element-wise Hadamard product of two matrices or vectors.

\paragraph{Update of $\bm{\alpha}$.} Likewise, the closed-form solution with respect to $\bm{\alpha}$ is $\bm{\alpha}=\frac{1}{M}\mathbf{U}^{T}\mathbf{1}_{M}$, which formally quantifies the clustering size for each class.

\paragraph{Update of $\mathbf{\Sigma}$.} For simplicity, we relax each clustering centroid's covariance matrix to be isotropic, \ie,  $\mathbf{\Sigma}_j=\sigma^2\mathbf{I}$, where $\mathbf{I}\in \mathbb{R}^{n\times n}$ is the identity matrix. This ensures a closed-form solution to variance 
$\sigma^2$:
\begin{small}
\begin{equation*}
	\sigma^2=\frac{\operatorname{tr}(\mathbf{X}^T\mathrm{diag}(\mathbf{U}^T\mathbf{1}_{M})\mathbf{X}-(2(\mathbf{U}\mathbf{X})^T+\mathbf{T}^T\mathrm{diag}(\mathbf{U}\mathbf{1}_{C}))\mathbf{T})}{n\times M},
\label{eq:sigma}
\end{equation*}
\end{small} where $\operatorname{tr}(\cdot)$ is the matrix trace operator.

\begin{figure}[t]
\centering
\includegraphics[width=0.236\textwidth]{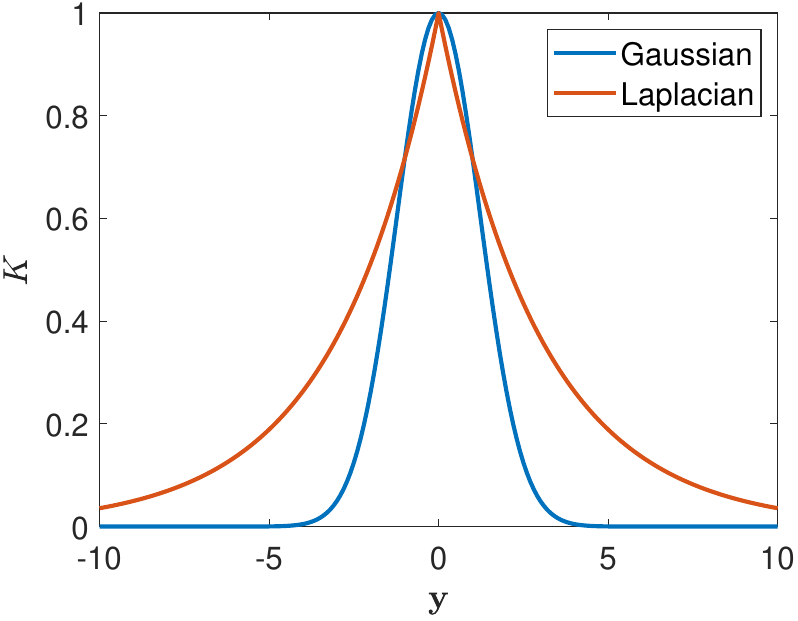}
\includegraphics[width=0.236\textwidth]{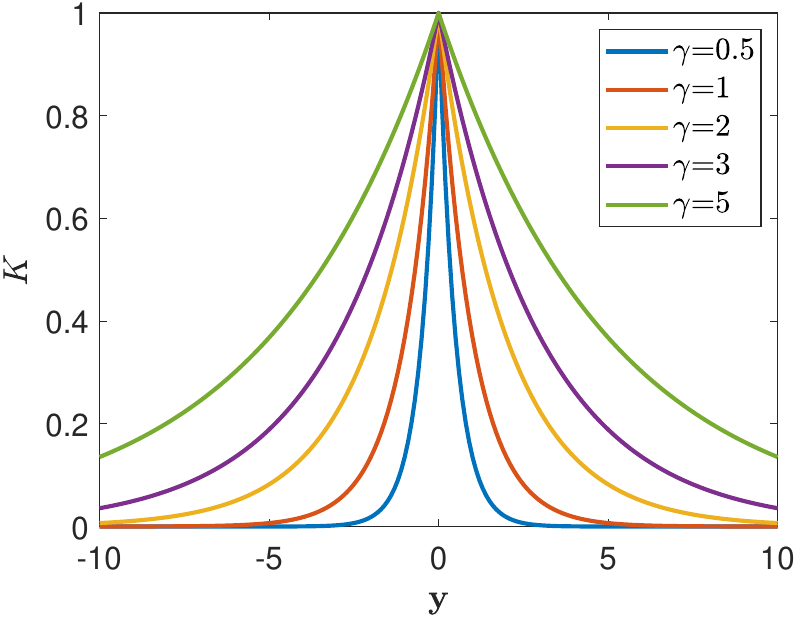}
\vskip -0.2cm
\caption{Left: Comparison between the Gaussian kernel and Laplacian kernel with the same bandwidth $\gamma=3$, where the latter delivers considerably thicker lower and upper tails, indicating higher robustness. Right: Laplacian kernel with different $\gamma$.}
\label{fig:gauss_laplace}
\vskip -0.7cm
\end{figure}
\paragraph{Update of ${\nu}$.} By leveraging the \emph{Riesz’s representation theorem}~\cite{chen2002different}, the closed-form solution to the regularization term $\nu$ can be expressed as
\begin{equation}
\centering
\nu(\bm{y})=\sum_{j=1}^Cc_jK(\bm{y},\bm{y}_j)+\sum_{\eta=1}^Nd_{\eta}\psi_{\eta}(\nu),
\label{eq:nu}	
\end{equation}where $\{c_j\in\mathbb{R}\}_{j=1}^{C}$ are the coefficient scalars, $K(\cdot,\cdot)$ is the kernel function defined in ~\cref{eq:regularization}, and $\{\psi_{\eta}\}_{\eta=1}^N$ represent a set of basis in the $N$-dimensional null space of $\mathcal{R}(\nu)$, which is typically composed by a set of polynomials for most choices of the stabilizer $\mathcal{R}(\nu)$.

In contrast to previous approaches that commonly utilize a \emph{Gaussian Radial Basis Function} (RBF)~\cite{myronenko2010point,ma2018nonrigid}, we adopt the sparsity-induced \emph{Laplacian kernel} with the robust $\ell_1$-norm to characterize the displacement field $\nu$, \ie,
\begin{eqnarray}
\centering
K(\bm{y}_i, \bm{y}_j)=\exp(-\gamma\|\bm{y}_i-\bm{y}_j\|_1), \quad \gamma>0
\label{eq:kernel}
\end{eqnarray} in which $\|\bm{y}_i-\bm{y}_j\|_1$ is the \emph{Manhattan distance} between the two input vectors. Compared to the RBF kernel, the Laplacian kernel exhibits stronger robustness due to its considerably thicker tails, as illustrated in~\cref{fig:gauss_laplace}. We also validate this conclusion through subsequent experiments.

Since the Laplacian kernel is positive definite, we obtain $\psi_{\eta}\equiv0$~\cite{girosi1995regularization}. By evaluating $\nu(\bm{y})$ at $\mathbf{Y}=\{\bm{y}_j\in \mathbb{R}^n\}_{j=1}^C$, following~\cite{myronenko2010point,hirose2020bayesian}, the coefficient vector $\mathbf{c}=[c_1, c_2, \cdots, c_C]^T\in \mathbb{R}^C$ is recovered from the following linear system:
\begin{small}
\begin{equation}
\centering
\mathbf{c}=(\mathbf{L}+\zeta\sigma^2\mathrm{diag}(\mathbf{U}\mathbf{1}_{C})^{-1})^{-1}(\mathrm{diag}(\mathbf{U}\mathbf{1}_C)^{-1}\mathbf{U}\mathbf{X}-\mathbf{Y}),
\label{eq:C}
\end{equation}
\end{small}where $\mathbf{L}$ is the Gram matrix
with $l_{ij}=K(\bm{y}_i, \bm{y}_j)$. Therefore, the newly deformed shape $\mathbf{T}$ from the source point set $\mathbf{Y}$ becomes $\mathbf{T}=\mathcal{T}(\mathbf{Y})=\mathbf{Y}+\mathbf{L}\mathbf{c}$.

\subsection{Improved Nystr{\"o}m Low-Rank Approximation}
The matrix inverse operation in~\cref{eq:C} leads to a computational complexity of $O(C^3)$ and a memory requirement of $O(C^2)$. Previous approaches often employ the \emph{fast Gauss transform} (FGT)~\cite{greengard1991fast} to reduce memory usage and accelerate computation. However, FGT is merely limited to the Gaussian kernel. To circumvent this issue, BCPD~\cite{hirose2020bayesian} combined the Nystr{\"o}m method~\cite{christopher2001using} and the KD tree search~\cite{bentley1975multidimensional} for acceleration. However, there are still two major issues that remain unresolved. (1) Due to the random sampling scheme used in BCPD, it is unclear how effective the Nystr{\"o}m approximation performs. (2) In order to address convergence issues when $\sigma^2$ becomes small, BCPD need to switch from Nystr{\"o}m approximation to KD tree search. This transition may affect the optimization trajectory.

To overcome these challenges, we opt to use \emph{clustering analysis} instead of random sampling. Concretely, we first employ the fast Elkan $k$-means algorithm (\cref{sec:k-means}) to partition $\mathbf{Y}$ into $C'$ disjoint clusters $\mathbf{P}_i\subset \mathbf{Y}$, with the corresponding clustering centroids as {$\{\bm{z}_i\in\mathbb{R}^n\}_{i=1}^{C'}$} ($C'\ll C$). Then, we adopt the improved Nystr{\"o}m approximation~\cite{zhang2008improved} for efficient and consistent optimization:
\begin{equation}
    \centering
\mathbf{L}\approx\mathbf{E}\mathbf{W}^{-1}\mathbf{E}^{T},
\end{equation}where $\mathbf{E}=[e_{ij}]\in \mathbb{R}^{C\times C'}$ and $\mathbf{W}=[w_{ij}]\in \mathbb{R}^{C'\times C'}$ are the low-rank Laplacian kernel matrices, with elements $e_{ij}=K(\bm{y}_i, \bm{z}_j)$ and $w_{ij}=K(\bm{z}_i, \bm{z}_j)$.

By incorporating clustering analysis, we achieve two key benefits: (1) rigorously proving the error bound of the Nystr{\"o}m approximation for our utilized Laplacian kernel, and (2) as demonstrated through experiments, providing encouraging results for non-rigid point set registration without compromising the optimization trajectory.

\begin{proposition}
	\label{prop:1}
The low-rank approximation error $\epsilon=\|\mathbf{L}-\mathbf{E}\mathbf{W}^{-1}\mathbf{E}^{T}\|_{F}$ in terms of the Laplacian kernel is bounded by
\begin{equation}
\epsilon\leq4\sqrt{2}T^{3/2}\gamma\sqrt{C'q}+2C'\gamma^2 Tq\|W^{\boldsymbol{-}1}\|_F,
\end{equation}where $\|\cdot\|_{F}$ is the matrix Frobenious norm, $T=\max_i|\mathbf{P}_i|$, $q=\sum_{j=1}^{C}\|\bm{y}_j-\bm{z}_{c'(j)}\|_2^2$ is the clustering quantization error with $c'(j)={\rm argmin}_{i=1, \cdots, C'}\|\bm{y}_j-\bm{z}_i\|_2$, and $\gamma$ is the Laplacian kernel bandwidth defined in Eq.~(\ref{eq:kernel}). 
\end{proposition}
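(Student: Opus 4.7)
The plan is to decompose the Nystr\"om reconstruction error via an idealized ``quantized'' surrogate in which every source point $\bm{y}_i$ is replaced by its cluster centroid $\bm{z}_{c'(i)}$. Let $\mathbf{H}\in\{0,1\}^{C\times C'}$ be the cluster assignment matrix, and set $\tilde{\mathbf{E}}=\mathbf{H}\mathbf{W}$ (whose $(i,j)$-entry is $K(\bm{z}_{c'(i)},\bm{z}_j)$) and $\tilde{\mathbf{L}}=\mathbf{H}\mathbf{W}\mathbf{H}^{T}$ (whose $(i,j)$-entry is $K(\bm{z}_{c'(i)},\bm{z}_{c'(j)})$). A direct check gives $\tilde{\mathbf{E}}\mathbf{W}^{-1}\tilde{\mathbf{E}}^{T}=\tilde{\mathbf{L}}$, so the Nystr\"om construction reproduces the \emph{quantized} Gram matrix exactly, and the whole reconstruction error is driven only by the spatial quantization $\bm{y}_i\mapsto\bm{z}_{c'(i)}$.

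Writing $\Delta\mathbf{E}=\mathbf{E}-\tilde{\mathbf{E}}$, the identity
$$\mathbf{L}-\mathbf{E}\mathbf{W}^{-1}\mathbf{E}^{T}=(\mathbf{L}-\tilde{\mathbf{L}})-\tilde{\mathbf{E}}\mathbf{W}^{-1}\Delta\mathbf{E}^{T}-\Delta\mathbf{E}\mathbf{W}^{-1}\tilde{\mathbf{E}}^{T}-\Delta\mathbf{E}\mathbf{W}^{-1}\Delta\mathbf{E}^{T}$$
together with the Frobenius triangle inequality reduces the problem to three pieces: a zeroth-order quantization term $\|\mathbf{L}-\tilde{\mathbf{L}}\|_F$, a first-order cross term $2\|\tilde{\mathbf{E}}\mathbf{W}^{-1}\Delta\mathbf{E}^{T}\|_F$, and a second-order term $\|\Delta\mathbf{E}\mathbf{W}^{-1}\Delta\mathbf{E}^{T}\|_F$. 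The first two will produce the $T^{3/2}\gamma\sqrt{C'q}$ summand in the proposition, and the last will produce the $C'\gamma^{2}Tq\,\|\mathbf{W}^{-1}\|_F$ summand.

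The analytical workhorse is the Lipschitz property of the Laplacian kernel: from $|e^{-a}-e^{-b}|\le|a-b|$ for $a,b\ge 0$ and the reverse triangle inequality of the Manhattan distance one obtains $|K(\bm{u},\bm{v})-K(\bm{u}',\bm{v})|\le\gamma\|\bm{u}-\bm{u}'\|_1$. Applied entrywise this yields $|\Delta E_{ij}|\le\gamma\|\bm{y}_i-\bm{z}_{c'(i)}\|_1$ and $|L_{ij}-\tilde L_{ij}|\le\gamma(\|\bm{y}_i-\bm{z}_{c'(i)}\|_1+\|\bm{y}_j-\bm{z}_{c'(j)}\|_1)$; squaring, summing, and applying Cauchy--Schwarz per cluster to convert the $\ell_1$ sums to the $\ell_2$ quantization error $q$ turns these entrywise estimates into Frobenius bounds proportional to $\gamma\sqrt{C'q}$. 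For the matrix products I would use $\|AB\|_F\le\|A\|_2\|B\|_F$, combined with $\|\tilde{\mathbf{E}}\|_2=\|\mathbf{H}\mathbf{W}\|_2\le\|\mathbf{H}\|_2\|\mathbf{W}\|_2$, where $\|\mathbf{H}\|_2=\sqrt{T}$ because $\mathbf{H}^{T}\mathbf{H}=\diag(|\mathbf{P}_1|,\dots,|\mathbf{P}_{C'}|)$ and $T=\max_i|\mathbf{P}_i|$; this is the source of the $T^{3/2}$ factor in the first term. The second-order contribution is then handled directly by $\|\Delta\mathbf{E}\mathbf{W}^{-1}\Delta\mathbf{E}^{T}\|_F\le\|\Delta\mathbf{E}\|_F^{2}\,\|\mathbf{W}^{-1}\|_F$.

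The main obstacle will be the cross term, where $\tilde{\mathbf{E}}$ couples the block-structured assignment matrix $\mathbf{H}$ with the dense landmark kernel $\mathbf{W}$: one has to balance a Frobenius estimate on $\Delta\mathbf{E}$ against an operator estimate on $\tilde{\mathbf{E}}$, and carefully absorb the $\ell_1$-to-$\ell_2$ overhead into the cluster-size constant $T$ rather than letting it blow up with the ambient dimension, while keeping track of the $\sqrt{2}$ factor arising from $(a+b)^2\le 2(a^2+b^2)$ in the zeroth-order term. Assembling the three bounds and closing the accounting with $C\le C'T$ then yields the advertised inequality.
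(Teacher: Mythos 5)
Your skeleton is the right one and is the same as the paper's: this is the Zhang--Kwok ``improved Nystr\"om'' error analysis~\cite{zhang2008improved} transplanted to the Laplacian kernel, built on the quantized surrogate $\tilde{\mathbf{E}}=\mathbf{H}\mathbf{W}$, $\tilde{\mathbf{L}}=\mathbf{H}\mathbf{W}\mathbf{H}^{T}$, the exactness identity $\tilde{\mathbf{E}}\mathbf{W}^{-1}\tilde{\mathbf{E}}^{T}=\tilde{\mathbf{L}}$, the kernel Lipschitz estimate, and $\|\mathbf{H}\|_2=\sqrt{T}$. But two of your execution steps do not go through as written. For the cross term, bounding $\|\tilde{\mathbf{E}}\|_2\le\|\mathbf{H}\|_2\|\mathbf{W}\|_2$ and then separately estimating $\mathbf{W}^{-1}$ leaves a factor $\|\mathbf{W}\|_2\|\mathbf{W}^{-1}\|_2$ --- a condition number of the landmark kernel matrix --- that does not appear in the target inequality and cannot be discarded. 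The step you actually need is the exact cancellation $\tilde{\mathbf{E}}\mathbf{W}^{-1}=\mathbf{H}\mathbf{W}\mathbf{W}^{-1}=\mathbf{H}$, so that $\|\tilde{\mathbf{E}}\mathbf{W}^{-1}\Delta\mathbf{E}^{T}\|_F\le\|\mathbf{H}\|_2\|\Delta\mathbf{E}\|_F=\sqrt{T}\,\|\Delta\mathbf{E}\|_F$ with no $\mathbf{W}$-dependence at all; this gives only $\sqrt{T}$, which suffices since the proposition's $T^{3/2}$ is weaker for $T\ge 1$.

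The more serious gap is the $\ell_1$-to-$\ell_2$ conversion, which you flag as ``the main obstacle'' but propose to resolve by absorbing it into $T$ --- that is not possible. The Lipschitz estimate gives $|\Delta E_{ij}|\le\gamma\|\bm{y}_i-\bm{z}_{c'(i)}\|_1$, while $q$ is defined with squared $\ell_2$ norms, and since $\|\bm{u}\|_1\ge\|\bm{u}\|_2$ the only bridge is $\|\bm{u}\|_1\le\sqrt{n}\,\|\bm{u}\|_2$, which costs a factor of the ambient dimension $n$, not of the cluster size. This cost is genuine, not an artifact of your estimates: the gradient of $\bm{y}\mapsto e^{-\gamma\|\bm{y}-\bm{z}\|_1}$ has $\ell_2$ norm $\gamma\sqrt{n}$, so the $\ell_2$-Lipschitz constant of the Laplacian kernel really is $\gamma\sqrt{n}$. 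Carrying your own plan through honestly therefore yields the kernel property $(K(\bm{a},\bm{b})-K(\bm{c},\bm{d}))^2\le 2n\gamma^2(\|\bm{a}-\bm{c}\|_2^2+\|\bm{b}-\bm{d}\|_2^2)$ rather than $2\gamma^2(\cdots)$, i.e.\ an extra $\sqrt{n}$ in the first summand and $n$ in the second of the final bound. You must either accept that dimension-dependent factor (harmless for $n=2,3$ but not matching the stated constants) or supply a genuinely dimension-free argument; your sketch provides neither, and this is the one step of the proof that is actually missing.
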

\begin{proof}
	Please see the \emph{Supplementary Material}.
\end{proof}

\section{Experimental Results}
We perform extensive experiments to demonstrate the performance of the proposed method and compare it with state-of-the-art approaches from both 2D and 3D categories. 
\paragraph{Implementation details.} Given a pair of point sets, for better numerical stability, we first perform shape normalization to make them follow the standard normal distribution. However, the registration evaluation is still based on the original inputs through denormalization. The Laplacian kernel bandwidth $\gamma$ is set to $2$ by default, and the number of clustering centroids in Elkan $k$-means equals to $0.3C$ for better trade-off between registration accuracy and efficiency. During optimization, we fix the two weight coefficients $\lambda=0.5$ and $\zeta=0.1$, which deliver impressive performance across various scenes. Our algorithm is implemented in MATLAB, on a computer running AMD Core Ryzen 5 3600XT (3.8GHz). We leverage publicly available implementations of baseline approaches for assessment, with their parameters either fine-tuned by ourselves or fixed by the original authors to achieve their best results.

\paragraph{Evaluation criteria.} As in~\cite{hirose2020bayesian}, we adopt the \emph{Root Mean Squared Error} (RMSE) to quantitatively assess the registration accuracy. For point sets with known ground-truth correspondences, we compute the squared distance between corresponding points directly. However, for point sets without annotated correspondences, such as distinct types of geometries, we identify the corresponding point pairs through the nearest neighbor search. Accordingly, the RMSE is defined as:
\begin{small}
\begin{equation*}
{{	\centering
	\operatorname{RMSE}(\mathcal{T}(\mathbf{Y}),{\mathbf{X}})\!=\!\sqrt{\operatorname{Tr}\{\left(\mathcal{T}(\mathbf{Y})\!-\!{\mathbf{X}}\right)^T(\mathcal{T}(\mathbf{Y})\!-\!{\mathbf{X}})\}/M}}},
\end{equation*}
\end{small}where $\mathcal{T}(\mathbf{Y})$ and ${\mathbf{X}}$ are the deformed and the target point sets, respectively. 

\subsection{2D Non-Rigid Point Set Registration}\label{sec:hand}
For 2D non-rigid point set registration, we utilize the benchmark IMM hand dataset~\cite{stegmann2002brief} for evaluation. This dataset encompasses 40 real images, showing the left hands of four distinct subjects, and each contains 10 images. As illustrated in Fig.~\ref{fig:hand_input}, the hand shape is described through 56 key points extracted from the contour of the hand. We employ the first pose from each group of hands as our target point set, while the remaining poses of the same subject serve as the source point sets. The quantitative comparison results with state-of-the-art 2D registration approaches including MR-RPM~\cite{ma2018nonrigid}, BCPD~\cite{hirose2020bayesian}, GMM~\cite{jian2010robust}, and ZAC~\cite{wang2020zero} are reported in~\cref{tab:hand}. We report the 
average RMSE for each subject along with the average registration timing of each method. As observed, our method consistently outperforms the comparative approaches with higher registration accuracy and efficiency across all subjects. Although without the need for constructing the initial point correspondences, like shape context~\cite{belongie2002shape} used in MR-RPM, our method still delivers RMSE that is orders of magnitude lower than that of most competitors, highlighting its compelling advantages. The qualitative comparison results regarding the inputs in the third row of \cref{fig:hand_input} are presented in Fig.~\ref{fig:hand_reg}.  
\vskip -0.3cm

\begin{figure}[t]
    \centering
\includegraphics[width=\linewidth]{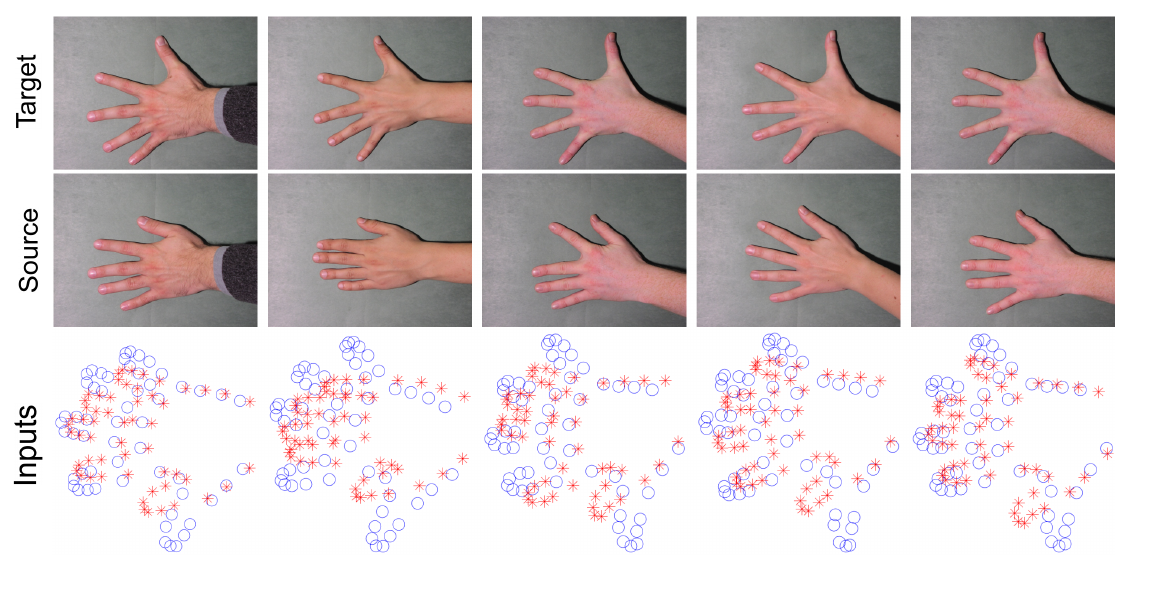}
\vskip -0.1cm
    \caption{Test samples from the 2D hand pose dataset with different subjects. The red {\color{red}{$\ast$}} and the blue {\color{blue}{$\circ$}} indicate the source and the target point sets, respectively.}
\label{fig:hand_input}
\vskip -0.5cm
\end{figure}

\paragraph{Robustness.} We further investigate the robustness of the designed method against external disturbances including noise and occlusion. We add a set of Gaussian noise with zero mean and varying standard deviations $\sigma\in[0.01, 0.06]$ to all of the source point sets defined in the above section. Additionally, we randomly erase several points, around $3\%\sim20\%$ of the source, to construct a range of occlusion geometries. \cref{fig:robustness} summarizes the average RMSE values across all subjects. It can be observed that our method still achieves the highest or comparable registration accuracy on all settings, highlighting its stability and robustness. Qualitative comparison results are presented in the \emph{Supplementary Material}.

\begin{table}[!htbp]
	\centering
	\caption{{Quantitative comparisons on the 2D hand pose dataset. \textbf{Bold} values stand for the top performer.}}
 \vskip -0.2cm
	\renewcommand{\arraystretch}{1.15} 
	\scalebox{0.73}{
		\begin{tabular}{c|c|c|c|c|c}
			\Xhline{1pt}
			{Method}&{Subject 1} &{Subject 2} &{Subject 3}&{Subject 4}&Time (s)\\ \cline{2-5}
			\Xhline{1pt}
			MR-RPM~\cite{ma2018nonrigid} &0.0940 &0.0834&0.1028&0.1388&0.2382\\
			BCPD~\cite{hirose2020bayesian}&0.1027 &0.1055  &0.1080 &0.1579 &0.6890\\
			
			GMM~\cite{jian2010robust}&0.0571 &0.0547&0.0734 &0.0917 &0.1140 \\
			ZAC~\cite{wang2020zero}&0.4886 &0.4566&0.4879 &0.4935 &0.4254 \\
			Ours&\textbf{0.0383} &\textbf{0.0481} &\textbf{0.0537} &\textbf{0.0879}&\textbf{0.1074}\\
			\Xhline{1pt}
		\end{tabular}
	}
	\label{tab:hand}
 \vskip -0.3cm
\end{table}

\begin{figure}
    \centering
\includegraphics[width=0.9\linewidth]{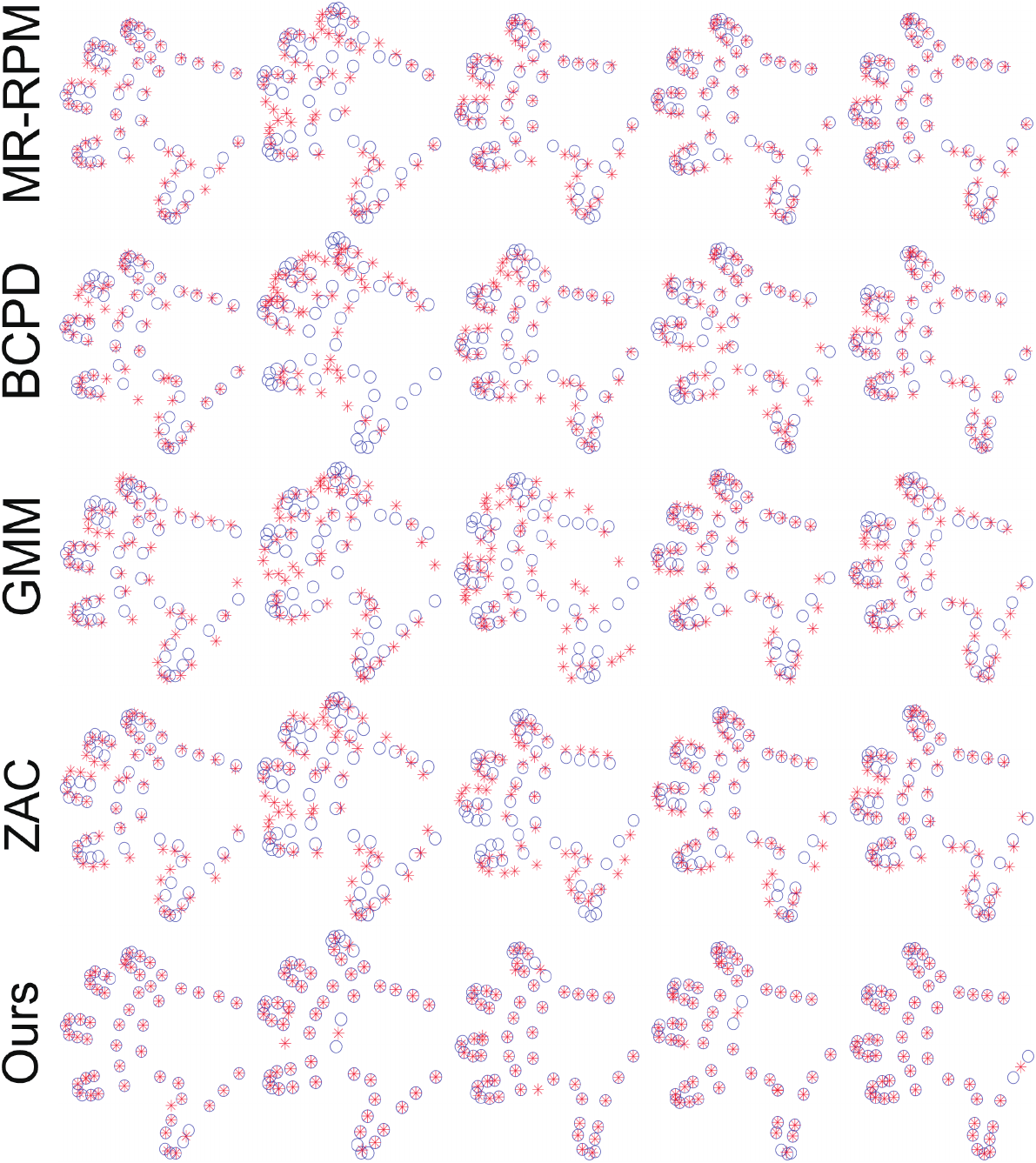}
     \vskip -0.2cm
     \caption{Qualitative comparisons on the 2D hand pose dataset.}\label{fig:hand_reg}
	\vskip -0.3cm
\end{figure}

\subsection{3D Non-Rigid Point Cloud Registration}
Since our method is dimension-independent, we further substantiate its efficacy on 3D point clouds and compare it with eight state-of-the-art 3D registration or deformation approaches, including BCPD~\cite{hirose2020bayesian}, GBCPD~\cite{hirose2020bayesian}, Fast\_RNRR~\cite{zhang2021fast}, AMM\_NRR~\cite{yao2023fast}, Sinkhorn~\cite{feydy2019interpolating}, as well as network-based ones Nerfies~\cite{park2021nerfies}, NDP~\cite{li2022non}, and NSFP~\cite{li2021neural}. For efficiency, we downsample the point clouds from datasets FAUST~\cite{CVPR2014} and TOSCA~\cite{bronstein2008numerical} using voxel grid filtering, with a point size of $3,000\sim4,000$. More experiments are presented in the \emph{Supplementary Material}.
\vskip -0.3cm
\begin{table*}[t]
	\centering
	\caption{Quantitative comparisons on the real-world FAUST human scan dataset.}
	\renewcommand{\arraystretch}{1} 
	\scalebox{0.8}{
		\begin{tabular}{c|c|c|c|c|c|c|c|c|c|c|c|c}
			\Xhline{1pt}
			\diagbox{Method}{Settings}&Intra-1&Intra-2&Intra-3&Intra-4&Intra-5&Intra-6&Inter-1&Inter-2&Inter-3&Inter-4&Average&Time (s) \\ \cline{1-13}
   BCPD~\cite{hirose2020bayesian}&0.0913&	0.1011&0.0872&0.0577&	0.1004	&0.0746&	0.1196&	0.0705&0.0935&0.0923&0.0888&3.0359\\
   GBCPD~\cite{hirose2022geodesic}&0.0285&0.0212&0.0211&0.0260&0.0244&0.0339&	0.0359	&0.0340&	0.0212&	0.0190&0.0265&1.9346\\
   Fast\_RNRR~\cite{Yao_2020_Fast_RNRR}&0.0430&0.0487&0.0397&0.0504&0.0429&0.0391&0.1358&0.0743&0.0477&0.0358&0.0557&\textbf{0.6324}\\
   AMM\_NRR~\cite{yao2023fast}&0.0544&0.0486&0.0400&0.0539&0.0405&0.0393&0.0838&0.0686&0.0422&0.0399&0.0511&2.0438\\
   Sinkhorn~\cite{feydy2019interpolating}&0.0654&	0.0638&	0.1372&	0.1096&	0.0749&	0.0821&	0.2467&	0.0781&	0.1400&	0.1720&0.1170&2.0377\\
   Nerfies~\cite{park2021nerfies}&0.0120 &0.0107 &0.0138 &0.0129 &0.0135&0.0118
&0.0121 &0.0144 &0.0140 &0.0140&0.0129&9.4287\\
   NDP~\cite{li2022non}&0.0183&	0.0199&	0.0192&	0.0152&	0.0170&0.0149&	0.0181&	0.0198&	0.0164&	0.0155&0.0174&1.7590\\
   NSFP~\cite{li2021neural}&0.0126 &0.0134 &0.0132 &0.0118 &0.0137 &0.0142&0.0167 &0.0162 &0.0148 &0.0166&0.0143&2.4607\\
  Ours&\textbf{0.0086}&	\textbf{0.0089}&	\textbf{0.0103}&	\textbf{0.0096}&	\textbf{0.0089}&	\textbf{0.0081}&	\textbf{0.0097}&	\textbf{0.0099}&	\textbf{0.0094}&	\textbf{0.0081}&\textbf{0.0092}&2.3746\\
			\Xhline{1pt}
		\end{tabular}
	}
	\label{tab:FAUST}
\end{table*}
\paragraph{Registration for real human scans.} ~\cref{tab:FAUST} and~\cref{fig:faust} report the quantitative and qualitative comparison results 
on the FAUST human dataset, respectively. The evaluation is conducted using six sets of subjects in six different and challenging poses for each subject. We first perform \emph{intra-class} registration, \ie, deforming the first human geometry to match the other poses for the same subject. Then, to validate the capability of the designed approach against large deformations, we  
further conduct an \emph{inter-class} registration test by aligning the first human pose of the $i$-th subject to all the poses of the $(i+2)$-th subject ($i=1, 2, 3, 4$). 
The statistical results summarized in \cref{tab:FAUST} demonstrate that our method achieves the highest registration accuracy across all subjects and outperforms competitors by a significant margin, even several orders of magnitude higher. Notably, while achieving remarkable accuracy, our method also maintains efficiency comparable to most competitors, making it a highly practical and effective solution. The qualitative comparison results in~\cref{fig:faust} indicate that our method not only ensures higher-quality deformations but also recovers the geometric details as well as the topology of the target subject more accurately and faithfully. 
\vskip -0.3cm

\begin{figure}[t]
    \centering
    \includegraphics[width=0.85\linewidth]{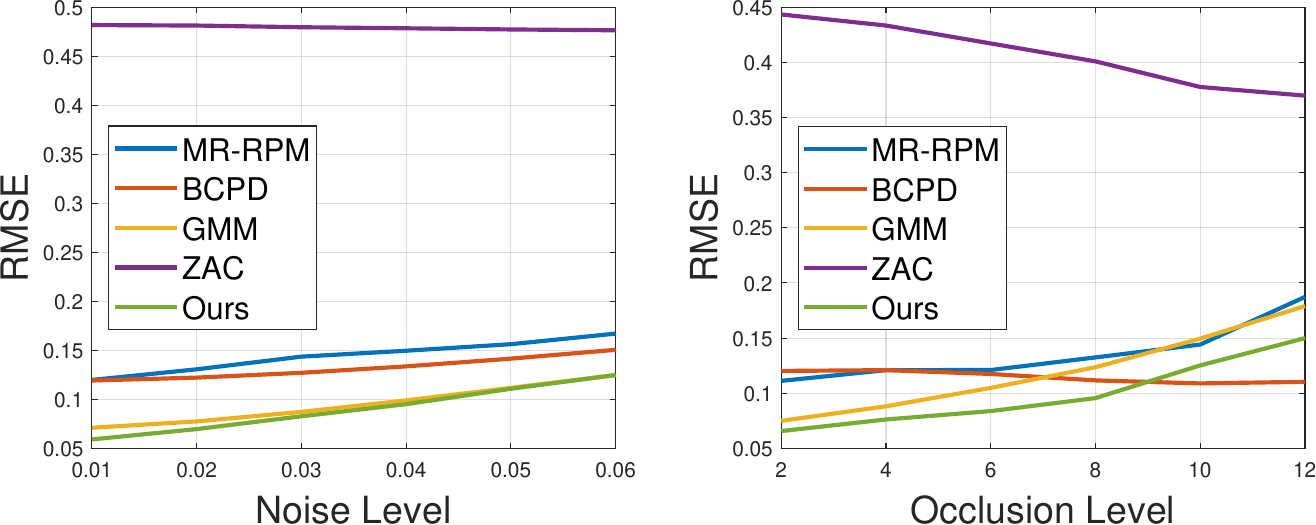}
    \vskip -0.2cm
    \caption{Robustness comparisons against external disturbances.}
    \label{fig:robustness}
    \vskip -0.5cm
\end{figure}

\begin{figure*}
    \centering
\includegraphics[width=0.85\linewidth]{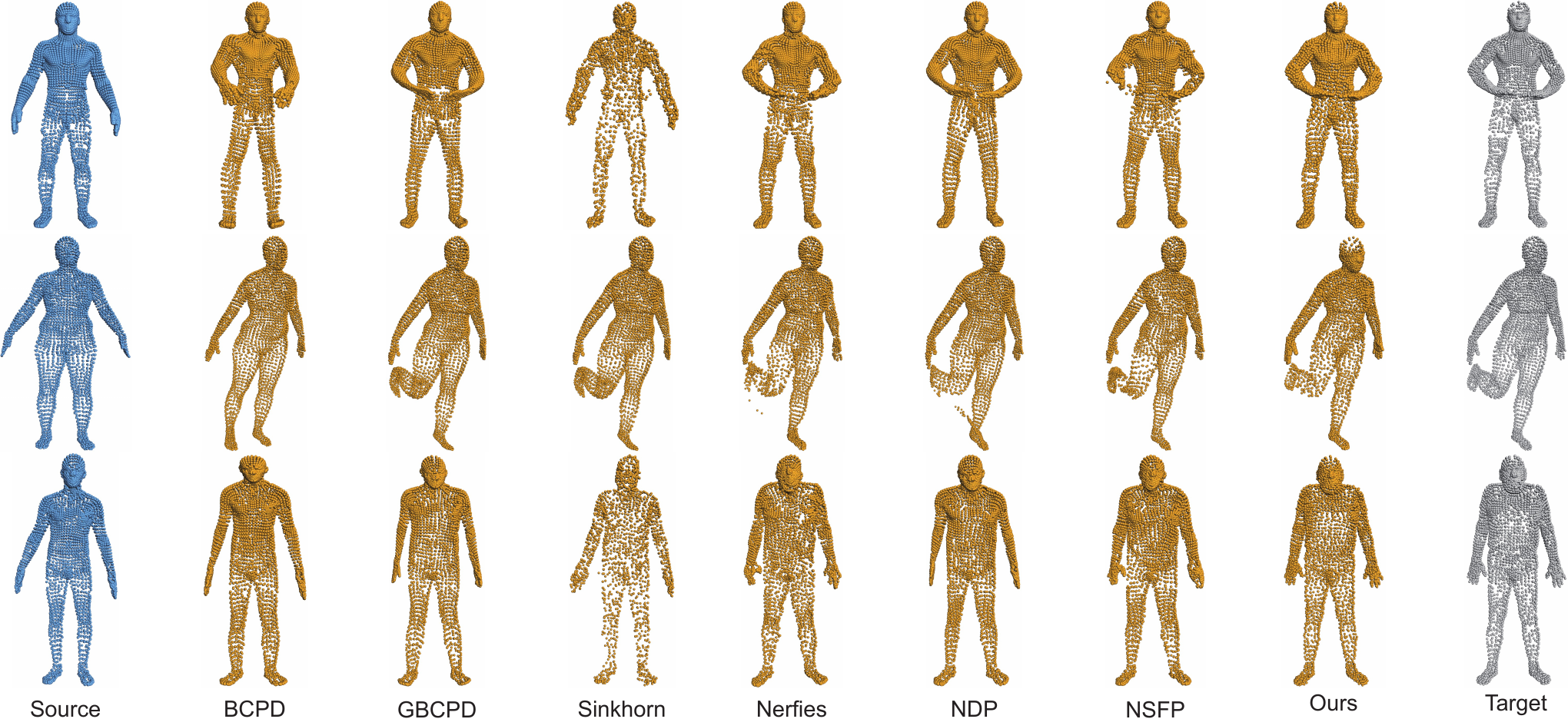}
 \vskip -0.2cm
 \caption{Qualitative comparisons on the FAUST dataset. The top and the bottom rows represent the test registrations of intra and inter-subject, respectively. }
 \label{fig:faust}
\vskip -0.5cm
\end{figure*}

\paragraph{Registration for larger deformations.} 
We further verify whether the proposed method improves the registration performance for point cloud pairs with much larger deformations. We evaluate four classes of animals from the TOSCA dataset~\cite{bronstein2008numerical} and report the average RMSE for each class of them. As illustrated in~\cref{fig:TOSCA}, the source and target point sets exhibit significant pose differences, making the registration quite challenging. \cref{tab:TOSCA} summarizes the quantitative comparison results. We exclude Fast\_RNRR, AMM\_NRR, and Sinkhorn from our analysis because they exhibit significant deviations from the target poses, rendering the error metrics unreliable. Our method consistently outperforms all the baselines by a large margin. \cref{fig:TOSCA} demonstrates that our method delivers highly stable and accurate registration results for point clouds with large deformations, even without the point-wise correspondences.

\begin{table}[t]
	\centering
	\caption{Quantitative comparisons on the TOSCA dataset.}
 \vskip -0.2cm
	\renewcommand{\arraystretch}{1} 
	\scalebox{0.8}{
		\begin{tabular}{c|c|c|c|c|c}
			\Xhline{1pt}
			{Method}&Cat&Centaur &Dog&Gorilla&Average\\ \cline{1-6}
BCPD~\cite{hirose2020bayesian}&3.9884&	8.1017&7.2800&5.6253&5.9935\\
GBCPD~\cite{hirose2022geodesic}&1.5631&2.9480&1.5300&3.5751&2.6523\\  Nerfies~\cite{park2021nerfies}&3.2704&2.8826&1.3612 &2.2809&2.3211\\ NDP~\cite{li2022non}&4.3639&3.4373&	3.1285&	2.8312&3.2560	\\
NSFP~\cite{li2021neural}&1.8774 &2.6425&1.6734 &2.2044 &2.0710\\
  Ours&\textbf{1.3496}&\textbf{1.8125}&	\textbf{1.2088}&	\textbf{1.6807}&\textbf{1.5247}\\
			\Xhline{1pt}
		\end{tabular}
	}
	\label{tab:TOSCA}
 \vskip -0.5cm
\end{table}

\begin{figure}
\centering
\includegraphics[width=\linewidth]{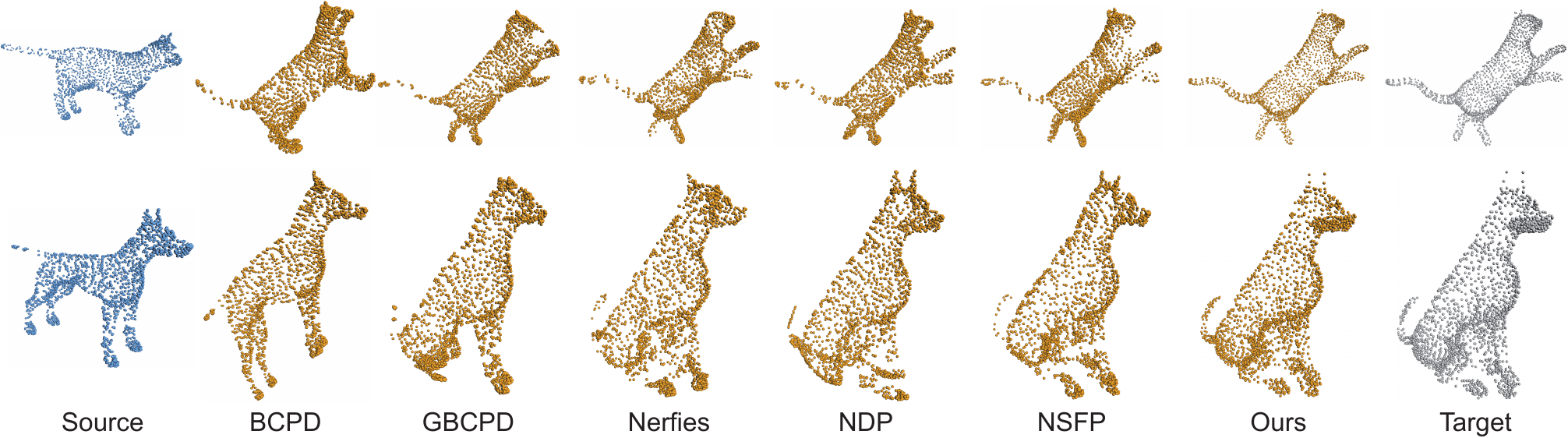}
\vskip -0.2cm
 \caption{Qualitative comparisons on the TOSCA dataset, where much larger shape deformations exist.}
\label{fig:TOSCA}
\vskip -0.4cm
\end{figure}

\subsection{Ablation Study}
\paragraph{Effect of the improved Nystr{\"o}m method.} \cref{fig:ablation_nystrom} reports the registration error, running time, and the matrix approximation error (defined in \cref{prop:1}), between the improved or clustered Nystr{\"o}m approximation method (Ours) and the random one on two randomly extracted FAUST models. We vary the approximation ratio $R\in[0.02, 0.4]$ with $\Delta R=0.02$. It can be seen that our method obtains significant registration and timing performance boost and decrease in matrix approximation error by a large margin, especially on lower ratios. 
\vskip -0.4cm
\begin{figure*}[t]
	\centering
\includegraphics[width=0.75\linewidth]{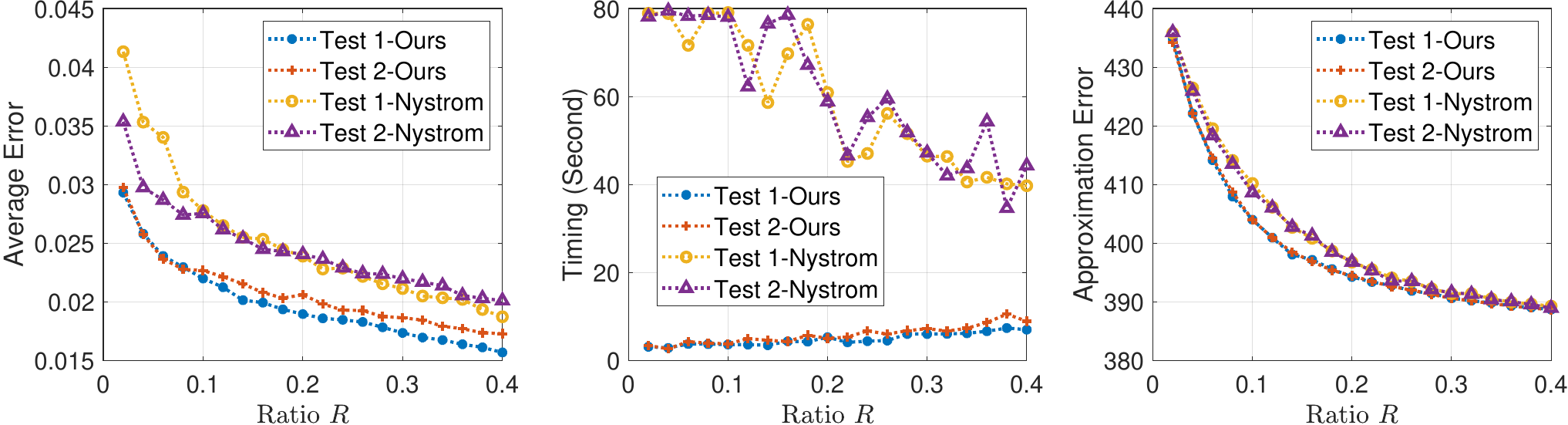}
\vskip -0.2cm
\caption{Comparisons between the  Nystr{\"o}m low-rank approximation and our clustering-improved one for non-rigid registration.
}\label{fig:ablation_nystrom}
\vskip -0.4cm
\end{figure*}

\begin{figure}
\centering
\includegraphics[width=0.45\linewidth]{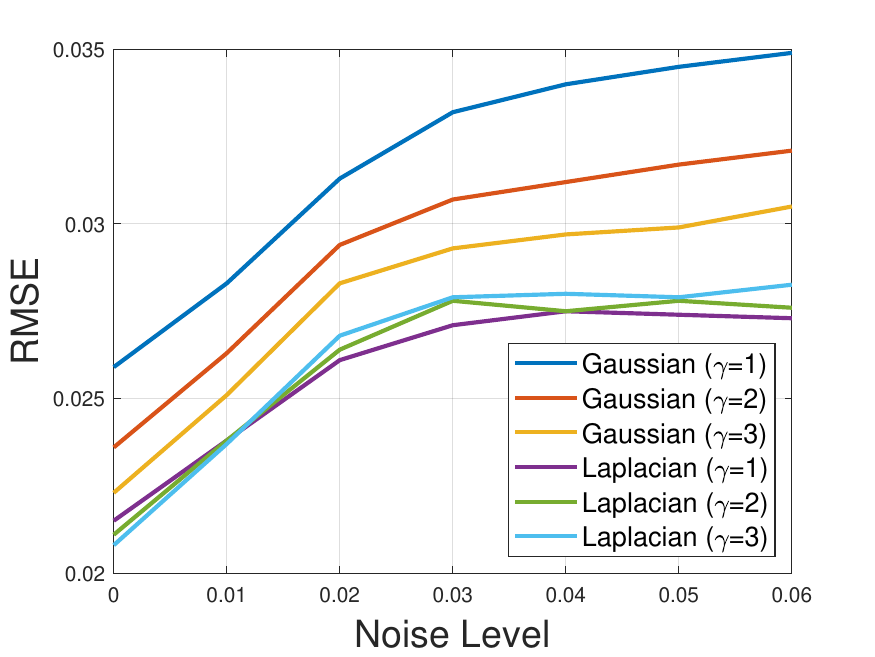} 
\includegraphics[width=0.425\linewidth]{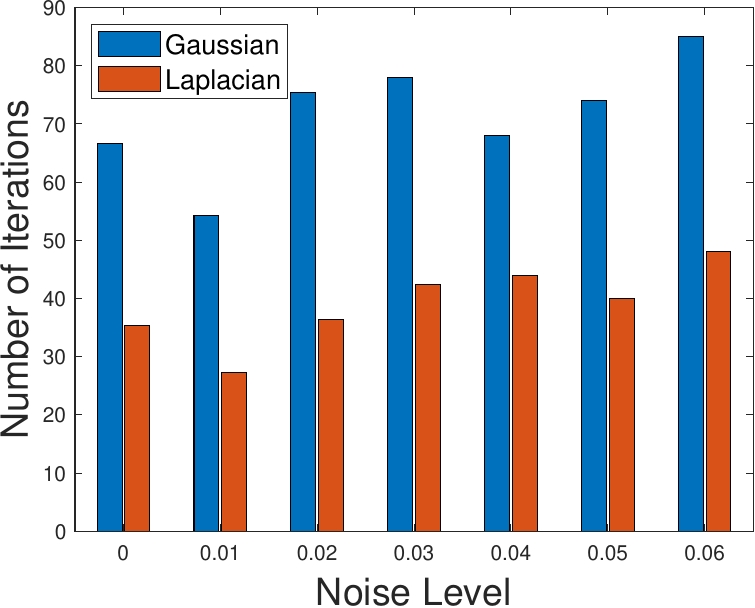} 
\vskip -0.3cm
\caption{Comparisons between the Gaussian and the Laplacian kernel functions. Left: Results with various kernel bandwidth $\gamma\in[1, 3]$. Right: The number of iterations when algorithms converge.} 
\label{fig:GaussVSLaplace}
\vskip -0.6cm
\end{figure}
\vskip -0.5cm
\paragraph{Laplacian VS. Gaussian.} 
\cref{fig:GaussVSLaplace} summarizes both the quantitative and qualitative comparison results between the kernel functions of Gaussian and Laplacian. We validate the merits and robustness of the Laplacian Kernel by aligning the source Bunny model~\cite{DataStanford1} contaminated by a set of noise ($\sigma\in[0, 0.06]$) to a randomly deformed Bunny. The kernel bandwidth $\gamma$ is varied in $[1, 3]$. We report the RMSE and the average number of iterations when algorithms converge in \cref{fig:GaussVSLaplace}.
~We find that the Laplacian kernel consistently outperforms the Gaussian kernel across all settings, suggesting the merits of the sparsity-induced  $\ell_1$ norm. Moreover, the Laplacian kernel delivers faster convergence and more accurate registration results (see the \emph{Supplementary Material}). 

\subsection{Applications}
\paragraph{Shape transfer.} 
As depicted in \cref{fig:shape_transfer}, we apply the proposed method to transfer shapes belonging to \emph{different categories} that require substantial deformations. We first transfer two geometries with the identical topology (sphere and cube) and then proceed to transfer CAD models from ShapeNet~\cite{chang2015shapenet}, which presents a more challenging task. Results indicate the effectiveness of our method in achieving accurate shape deformation while faithfully preserving the geometric details of the source shapes. Notably, our method consistently produces high-quality deformation results even when the shapes possess significantly distinct topology. More results on shape transfer are presented in the \emph{Supplementary Material}.
\vskip -0.3cm

\vskip -0.9cm
\paragraph{Medical registration.} Deforming a standard medical template  to match those captured from individual patients is a crucial step in the field of medical data analysis. In \cref{fig:medical}, we demonstrate the efficacy of our method by aligning a 3D inhale lung volume to two exhale lungs~\cite{castillo2013reference} and two brain vessels~\cite{DataBrain}, extracted from real-world CT and MRA images. Despite the presence of complex structures, large deformations, and mutual interference, our method consistently achieves impressive results in accurately deforming the template models to align the target shapes.

\begin{figure}
    \centering
\includegraphics[width=0.9\linewidth]{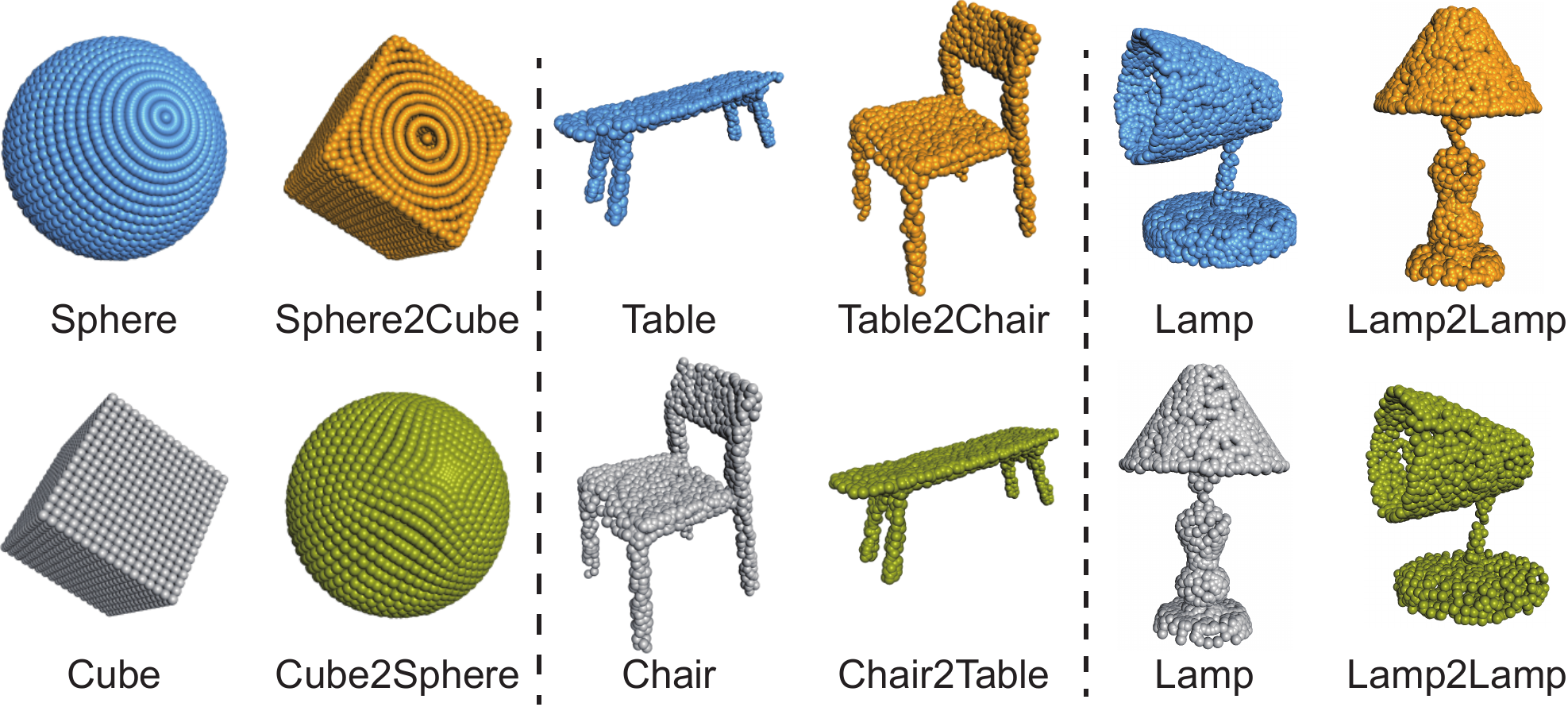}
\vskip -0.2cm
    \caption{Application of the proposed method to shape transfer.}
\label{fig:shape_transfer}
\vskip -0.3cm
\end{figure}

\begin{figure}
    \centering
\includegraphics[width=0.9\linewidth]{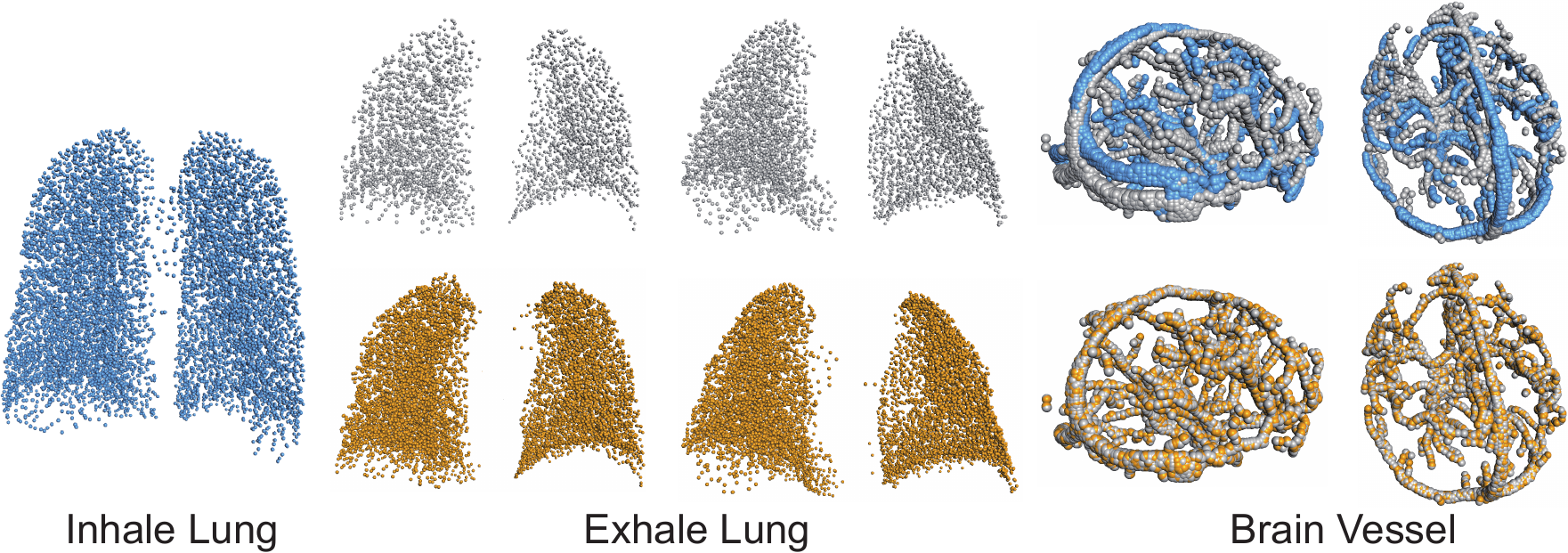}
\vskip -0.2cm
\caption{Application of our method to medical data registration.} 
\label{fig:medical}
\vskip -0.5cm
\end{figure}

\section{Conclusions}
We proposed an algorithm for solving non-rigid point set registration without prescribed correspondences. The key contribution of our method lies in reformulating non-rigid registration as an unsupervised clustering process that enables {holistic optimization}, dimension-independent, {closed-form solutions}, and {handling large deformations} simultaneously. Moreover, we introduce the $\ell_1$-induced Laplacian kernel to achieve a more robust solution than the Gaussian kernel and provide a rigorous approximation bound for the Nystr{\"o}m  method.
Our method achieves higher-quality results than traditional methods and recent network models, particularly on geometries that exhibit significant deformations.  We also showcase its applicability in challenging tasks such as shape transfer and medical registration. 

\noindent{\textbf{Acknowledgements.}} This work is partially funded by the Strategic Priority Research Program of the Chinese Academy of Sciences (XDB0640000), National Science and Technology Major Project (2022ZD0116305), National Natural Science Foundation of China (62172415,62272277,62376267), and the innoHK project.

{\small
\bibliographystyle{ieeenat_fullname}
\bibliography{references_forever}
}

\end{document}